
\documentclass[10pt,journal,compsoc]{IEEEtran}
%


%

%
\ifCLASSOPTIONcompsoc
  \usepackage[nocompress]{cite}
\else
  \usepackage{cite}
\fi
%

%
\ifCLASSINFOpdf
\else
\fi
\usepackage[caption=false,font=footnotesize,labelfont=sf,textfont=sf]{subfig}
\usepackage[caption=false,font=footnotesize]{subfig}
\usepackage{amssymb,amsmath,amsthm}
\usepackage{graphicx}
\usepackage{algorithm}
\usepackage{algorithmic}
\usepackage{bm}
\usepackage{multirow}
\usepackage{booktabs}
\usepackage{makecell}
\newcommand{\cK}{{\cal K}}
\def\reals{\mathbb{R}}
\def\alphas{\bm{\alpha}}
\def\betas{\bm{\beta}}
\def\omegas{\bm{\omega}}
\def\nus{\bm{\nu}}
\newtheorem{theorem}{Proposition}

\hyphenation{op-tical net-works semi-conduc-tor}

\begin{document}
%
\title{Learning with Asymmetric Kernels: Least Squares and Feature Interpretation}
\author{Mingzhen~He,
        Fan~He,
        Lei~Shi,
        Xiaolin~Huang,~\IEEEmembership{Senior~Member,~IEEE}
        and~Johan~A.K.~Suykens,~\IEEEmembership{Fellow,~IEEE}
\IEEEcompsocitemizethanks{
\IEEEcompsocthanksitem M. He, F. He and X. Huang (corresponding author) are with Institute of Image Processing and Pattern Recognition, Shanghai Jiao Tong University, Shanghai
200240, P.R. China (e-mail: \{mingzhen\_he; hf-inspire; xiaolinhuang\}@sjtu.edu.cn).\protect\\
\IEEEcompsocthanksitem L. Shi is with the Shanghai Key Laboratory for Contemporary Applied Mathematics, Fudan University, Shanghai 200433, China, and also with the School of Mathematical Sciences, Fudan University, Shanghai 200433, China (e-mail: leishi@fudan.edu.cn).\protect\\
\IEEEcompsocthanksitem J. A. K. Suykens is with the Department of Electrical Engineering (ESAT-STADIUS), KU Leuven, B-3001 Leuven, Belgium (e-mail:
johan.suykens@esat.kuleuven.be).\protect\\}
\thanks{}}
\IEEEtitleabstractindextext{%
\begin{abstract}
Asymmetric kernels naturally exist in real life, e.g., 
for conditional probability and directed graphs. However, most of the existing kernel-based learning methods
require kernels to be symmetric, 
which prevents the use of asymmetric kernels. This paper addresses the asymmetric kernel-based learning in the framework of the least squares support vector machine named \emph{AsK-LS}, resulting in the first classification method that can utilize asymmetric kernels directly. We will show that AsK-LS can learn with asymmetric features, namely source and target features, while the kernel trick remains applicable, i.e., the source and target features exist but are not necessarily known.  
Besides, 
the computational burden of AsK-LS is as 
cheap as dealing with symmetric kernels. Experimental results on the Corel database, directed graphs, and the UCI database will show that, in the case asymmetric information is crucial, the proposed AsK-LS can learn with asymmetric kernels and performs much better than the existing kernel methods that have to do symmetrization to accommodate asymmetric kernels. 
\end{abstract}

\begin{IEEEkeywords}
Asymmetric kernels, Least squares support vector machine, Kullback-Leibler kernel, Directed graphs.
\end{IEEEkeywords}}

\maketitle

\IEEEdisplaynontitleabstractindextext

%
\IEEEpeerreviewmaketitle

\IEEEraisesectionheading{\section{Introduction}\label{sec:intro}}

%
%
%
%
\IEEEPARstart{K}{ernel-based} learning \cite{scholkopf2002learning, suykens2002least, vapnik2013nature} is an important scheme in machine learning and has been widely used in classification \cite{cortes1995support,soman2009machine}, regression \cite{drucker1997support}, clustering \cite{girolami2002mercer}, and many other tasks. 
Traditionally, a kernel that can be used in kernel-based learning should satisfy Mercer's condition \cite{jeffreys1999methods}. For Mercer's condition, there are two well-known requirements on the kernel $\cK(\cdot,\cdot): \mathbb{R}^{d}\times\mathbb{R}^{d} \mapsto \mathbb{R}$: for samples $\{\bm{x}_i, y_i\}_{i=1}^m$, where $d$ and $m$ are the dimension and the number of data. The kernel matrix $\bm{K}: \bm{K}_{ij}= \cK(\bm{x}_i, \bm{x}_j)$ should be i) symmetric and ii) positive semi-definite (PSD). When the latter condition is relaxed, the flexibility is enhanced and those methods are called \textit{indefinite learning}, for which some interesting results 
could be found in \cite{ong2004learning,haasdonk2005feature,huang2017indefinite,oglic2019scalable,liu2021fast}. 

However, discussion on relaxing the symmetry condition is rare. Many asymmetric similarities exist in real life. For example, in directed graphs, the adjacency matrix is certainly asymmetric and thus the connection similarity is asymmetric, i.e., $d(\bm{x}_i, \bm{x}_j) \neq d(\bm{x}_j, \bm{x}_i)$: the path from the $i$-th node to the $j$-th node is not equal to that from the $j$-th node to the $i$-th node. Conditional probability, which has been widely used to measure the directional similarity \cite{hinton2002stochastic}, is also asymmetric: $p(\bm{x}_i|\bm{x}_j) \neq p(\bm{x}_j|\bm{x}_i)$. Those asymmetric measurements cannot be used in the current kernel-based learning directly. Let us consider the support vector machine (SVM, \cite{cortes1995support}):
\begin{equation}\label{svm}
\min_{\alphas\in \mathbb{R}^n} ~~ \frac{1}{2} \alphas^\top \bm{K} \alphas - {\bf 1}^\top \alphas, \mathrm{~s.t.~} \bm{Y}^\top \alphas  = 0, 0 \leq \alphas \leq C\bm{1}, 
\end{equation}
where $C > 0$ is a pre-given constant, $\bm{1}$ is a n-dimensional vector of all ones, $\bm{Y} = [y_1,\cdots,y_m]^\top$, and $\alphas$ is a dual variable vector. When $\bm{K}$ is asymmetric, at least in (\ref{svm}), we can directly use it. However, by noticing that \[
\alphas^\top \bm{K} \alphas = \alphas^\top \frac{\bm{K}^\top + \bm{K}}{2} \alphas, \forall \alphas\in \mathbb{R}^n, \bm{K} \in \mathbb{R}^{n\times n},
\]
one may find that only the symmetric part of an asymmetric kernel is learned by directly using it in the SVM.

Another popular kernel-based learning framework is the least squares support vector machine (LS-SVM, \cite{suykens1999least, suykens2002least}). Its dual form is 
the following linear system,
\begin{equation*}\label{LS-SVM}
\left[
\begin{array}{cc}
0 &\bm{Y}^\top\\ 
\bm{Y}&\frac{\bm{I}}{\gamma}+\bm{H}\\ 
\end{array}
\right ]
\left[
\begin{array}{c}
 b \\ 
 \alphas\\
\end{array}
\right ]
=
\left[
\begin{array}{c}
 0\\
\bm{1}\\
\end{array}
\right ],
\end{equation*}
where $\bm{I}$ is an identity matrix and $\bm{H}:\bm{H}_{ij} = y_i \bm{K}_{ij} y_j$, respectively. An interesting point here is that using an asymmetric kernel in the LS-SVM will not reduce to its symmetrization and asymmetric information can be learned. Then we can develop asymmetric kernels in the LS-SVM framework in a straightforward way. The corresponding kernel trick, the feature interpretation, and the asymmetric information will be investigated in this paper. Notice that we do not claim that asymmetric kernels could not be applied in the SVM, but it is not straightforward and requires further investigation. Similarly, for symmetric but indefinite kernels, the solving method in the LS-SVM framework keeps easy  \cite{huang2017indefinite}, while the SVM needs delicately design in form, theory, and solving-algorithm \cite{loosli2015learning,oglic2019scalable}.

\begin{figure*}[ht!]
\centering
\setlength{\abovecaptionskip}{2pt}%
\setlength{\belowcaptionskip}{3pt}%
\includegraphics[width=145mm]{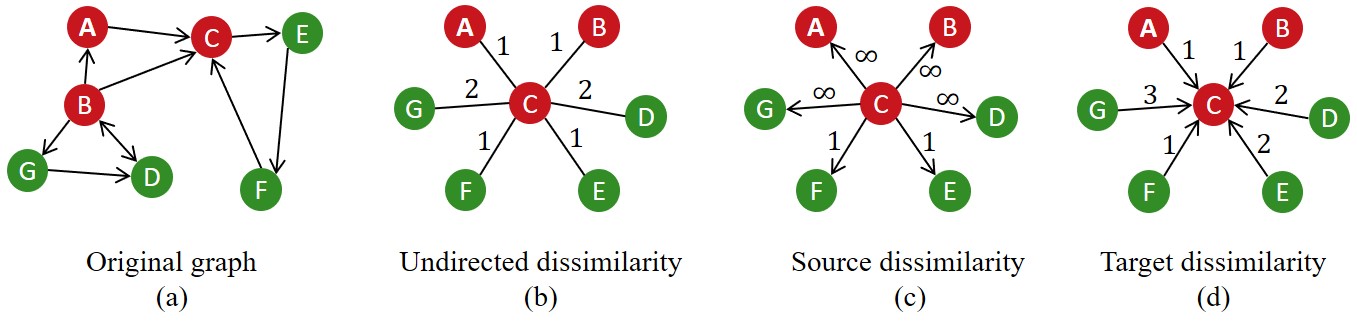}
\caption{A simple illustration for asymmetric dissimilarity. In order to show the asymmetric information, the dissimilarity between sample C and the other samples is shown in sub-figures (b), (c) and (d) as an example. (a) There are seven samples on a directed graph where red and green colors indicate two categories. The dissimilarity from one sample to another one is defined as the shortest path from the first sample to the latter, for example, the dissimilarity from sample A to sample E is 2 and if the sample can not be reached, dissimilarity is $\infty$. (b) Symmetrization. The directed edges are replaced by undirected edges. (c) Source space. The dissimilarity from C to others. (d) Target space. The dissimilarity from other samples to C.
}
\label{fig:1}
\end{figure*}

In this paper, a novel method called \emph{AsK-LS} for learning with asymmetric kernels in the framework of least squares will be established.
The most important discussion is to investigate the kernel trick and the feature interpretation on how the asymmetric information could be extracted by Ask-LS. Generally, there are two features involved in the kernel trick. Using the concept in directed graphs, which have two feature embeddings for source and target nodes \cite{ou2016asymmetric, zhou2017scalable, khosla2019node}, we call the features in AsK-LS as \emph{source feature} and \emph{target feature}. The name distinguishes two features but does not mean that it can only be used for directed graphs. For the singular value decomposition, asymmetric kernels could be introduced into the LS-SVM framework \cite{suykens2016svd} where the two features are related to columns and rows of the matrix, respectively.



In the rest of this paper, we will first discuss asymmetric kernels and illustrate that there are two different features embedded in an asymmetric kernel; see Section 2 for details. Then in Section \ref{sec:method} we will formulate AsK-LS, discuss its feature interpretation, and design the solving algorithm. In principle, an asymmetric kernel contains more information than the symmetric one. Thus, the proposed AsK-LS will demonstrate advantages when the asymmetric information is crucial, as numerically evaluated in Section \ref{sec:experiment}. Section \ref{sec:conclusion} will end this paper with a brief conclusion.


\section{Asymmetric kernels}
In the classical kernel-based learning, the kernel $\cK$ is symmetric, i.e., the kernel matrix $\bm{K}_{ij} = \cK(\bm{x}_i, \bm{x}_j)$ for training samples is also symmetric. But there could be many asymmetric kernels. For example, in image classification tasks, Kullback-Leibler (KL, \cite{moreno2003kullback}) divergence could be used to measure the dissimilarity between two probability distributions. The directed graph is another example, where the similarity between two nodes is essentially asymmetric.


Intuitively, $\bm{K}$ being asymmetric contains more information than that being symmetric. For symmetric kernels, the kernel trick (there are additional conditions for the existence of kernel trick; see, e.g, \cite{jeffreys1999methods}) means that there is a feature mapping $\phi$ such that $\cK(\bm{u},\bm{v}) = \left<\phi(\bm{u}), \phi(\bm{v})\right>$ for two samples $u$ and $v$. Then it is expected that there are more features for asymmetric kernels. Fig. \ref{fig:1} illustrates a simple case for asymmetric dissimilarity. In Fig. \ref{fig:1}(b-d), we illustrate three methods which make the kernel matrix symmetric. For source dissimilarity, one can extract a nonlinear feature mapping, denoted as $\phi_s(\bm{u})$. Meanwhile, a target nonlinear feature mapping, denoted as $\phi_t(\bm{u})$ which is generally different from $\phi_s(\bm{u})$, can be also extracted. However, in the existing kernel-based learning, only symmetric kernels are acceptable and hence one has to use: symmetric similarity, for example, $(\bm{K}^\top + \bm{K})/2$ or $\bm{K}^\top \bm{K}$, which indicates that those symmetrization methods may lose the asymmetric information. 




Besides KL divergence and directed graphs, there are other tasks where the asymmetric kernels may be superior. In kernel density estimation problems, asymmetric kernels performed better than symmetric ones when the underlying random variables were bounded \cite{mackenzie2004asymmetric,kuruwita2010density,koul2013large}. In Gaussian process regression tasks, Pintea et al. argued that it was helpful to set an individual kernel parameter for each data center, which enabled each data center to learn a proper kernel parameter in its neighborhood and resulted in an asymmetric kernel matrix \cite{pintea2018asymmetric}. In federated learning tasks \cite{huang2021fl}, an asymmetric neural tangent kernel was established to address the issue that the gradient of the global machine was not determined by local gradient directions directly.

Our aim in this paper is to propose a novel method to directly learn with asymmetric kernels and correspondingly can learn with two feature mappings. For a long time, symmetrization is the main way for dealing with asymmetric kernels. In an early paper \cite{tsuda1999support}, Tsuda let the asymmetric kernel matrix $\bm{S}$ be symmetric by multiplying its transpose, then a new symmetric matrix $\bm{Q}$ was obtained as $\bm{Q}=\bm{S}^\top\bm{S}$. Munoz et al. utilized a pick-out method to convert the asymmetric kernel into the symmetric one \cite{munoz2003support}. Moreno et al. studied the KL divergence kernel $D(P,Q)$ in the SVM on multimedia data \cite{moreno2003kullback}. They defined $D(P,Q) =\rm KL(P||Q) + KL(Q||P)$ to satisfy the Mercer condition, but the asymmetric information was disappeared. Koide and Yamashita proposed an asymmetric kernel method and applied it to the Fisher's discriminant (AKFD) \cite{koide2006asymmetric}. They claimed that an asymmetric kernel $\mathcal{K}(\bm{x},\bm{y}) = \left<\phi_1(\bm{x}),\phi_2(\bm{y}) \right>$ was generated by the inner product between two different feature mappings. In the AKFD, the decision function was assumed to be spanned by $\{\phi_1(\bm{x}_i)\}_{i=1}^m$ and input data were mapped by $\phi_2$. However, the assumption of the AKFD was very strict and the situation that the decision function was spanned by $\{\phi_2(\bm{x}_i)\}_{i=1}^m$ was not considered. Wu et al. proposed a hyper asymmetric kernel method to learn with asymmetric kernels between data from two different input spaces such as query space $\mathcal{X}$ and document space $\mathcal{Y}$ \cite{wu2010asymmetric}, while an asymmetric kernel degenerated to a symmetric one when two spaces were identical, i.e., $\mathcal{X}=\mathcal{Y}$. In summary, these works used symmetrization methods at the optimization level, which canceled the asymmetric information and was not expected in the asymmetric kernel-based learning.

It was interesting that the matrix singular value decomposition (SVD) could be merged in the LS-SVM framework \cite{suykens2016svd,suykens2017deep}. The matrix to be decomposed could be asymmetric and even non-square, implying that the LS-SVM could tolerate asymmetric kernels. From the viewpoint of the LS-SVM setting, the matrix SVD was related to two feature maps acting on the column vectors and the row vectors of the matrix, respectively. For directed graphs, it was also possible to use the adjacency matrix without the label to extract embeddings as the source and target features, respectively \cite{ou2016asymmetric, zhou2017scalable, khosla2019node}. These works, although in an unsupervised setting, demonstrated that asymmetric kernels can be studied rather than through the symmetrization process. 
 

\section{Asymmetric kernels in the LS-SVM}
\label{sec:method}

\subsection{PSD and indefinite kernels in the LS-SVM}
\label{sec:ls_svm}

Given training samples $\{\bm{x}_i,y_i\}_{i=1}^m$ with $\bm{x}\in \mathbb{R}^d$ and $y\in \{+1,-1\}$, a discriminant function $f:\mathbb{R}^d \rightarrow \mathbb{R}$ is constructed to classify the input samples. For linearly inseparable problems, a non-linear feature mapping $\phi:\mathbb{R}^d\rightarrow \mathbb{R}^p$ is needed, where $\mathbb{R}^p$ is a high-dimensional space.

LS-SVMs with PSD kernels can be solved by the following optimization problem \cite{suykens1999least},
\begin{equation}
\begin{aligned}\label{ls-svm}
&\min_{\omegas , b, \xi} \quad \frac{1}{2} \omegas^\top\omegas + \frac{\gamma}{2}\sum_{i=1}^m\xi^2_i\\
&\begin{array}{r@{\quad}r@{}l@{\quad}l}
\mbox{s.t.} \quad y_i(\omegas^\top \phi(\bm{x}_i)+b) = 1-\xi_i \\
\forall i \in\{1,2,\cdots,m\},
\end{array}
\end{aligned}
\end{equation}
where the discriminant function $f$ is formulated as $f(\bm{x}) = \omegas^ \top \phi(\bm{x})+b$. When the kernel is generalized to a non-PSD one, the primal problem is as follows \cite{huang2017indefinite},
\begin{equation}
\begin{aligned}\label{indefinite-ls-svm}
&\min_{\omegas , b, \xi} \quad \frac{1}{2} (\omegas_+^\top\omegas_+-\omegas_-^\top\omegas_-) + \frac{\gamma}{2}\sum_{i=1}^m\xi_i^2\\
&\begin{array}{r@{\quad}r@{}l@{\quad}l}
\mbox{s.t.} \quad y_i(\omegas_+^\top \phi_+(\bm{x}_i) + \omegas_-^\top \phi_-(\bm{x}_i) +b) = 1-\xi_i \\
\forall i \in\{1,2,\cdots,m\},
\end{array}
\end{aligned}
\end{equation}
where $\phi_+:\mathbb{R}^d\rightarrow \mathbb{R}^{p_1}$ and $\phi_-:\mathbb{R}^d\rightarrow \mathbb{R}^{p_2}$ are two non-linear feature mappings, both $\mathbb{R}^{p_1}$ and $\mathbb{R}^{p_2}$ are potential high-dimensional spaces. The discriminant function here is formulated as $f(\bm{x}) = \omegas_+^\top \phi_+(\bm{x}_i) + \omegas_-^\top \phi_-(\bm{x}_i) +b$.

Although the feature interpretations of (\ref{ls-svm}) and (\ref{indefinite-ls-svm}) are not the same, their dual problems share the same formulation as below,
\begin{equation}\label{linear_system_ls_svm}
\left[
\begin{array}{cc}
0 &\bm{Y}^\top\\ 
\bm{Y}&\frac{\bm{I}}{\gamma}+\bm{H}\\ 
\end{array}
\right ]
\left[
\begin{array}{c}
 b \\ 
 \alphas\\
\end{array}
\right ]
=
\left[
\begin{array}{c}
 0\\
\bm{1}\\
\end{array}
\right ].
\end{equation}


The kernel trick, which gives the feature interpretation of (\ref{linear_system_ls_svm}), is different for different types of kernels. If the kernel is PSD then, 
\[
\mathcal{K}(\bm{x}_i,\bm{x}_j) =  \left<\phi(\bm{x}_i),\phi(\bm{x}_j)\right>,
\]
If the kernel is non-PSD but has a positive decomposition, for which the conceptual condition and a practice judgment can be found in \cite{liu2021fast}, the kernel trick becomes,
\begin{equation*}
    \begin{aligned}
        \mathcal{K}(\bm{x}_i,\bm{x}_j) &= \left<\phi_+(\bm{x}_i),\phi_+(\bm{x}_j)\right>-\left<\phi_-(\bm{x}_i),\phi_-(\bm{x}_j)\right>\\
        &= \mathcal{K}_+(\bm{x}_i,\bm{x}_j)-\mathcal{K}_-(\bm{x}_i,\bm{x}_j),
    \end{aligned}
\end{equation*}
where $\mathcal{K}_+$ and $\mathcal{K}_-$ are two PSD kernels.

\subsection{AsK-LS}

When looking at the framework of LS-SVM from the viewpoint of solving (\ref{linear_system_ls_svm}), there is not any problem if $\bm{K}$ is asymmetric. It is still well-defined and a solution can be readily obtained. The key problem is to analyze what we really learn if $\bm{K}$ is asymmetric. 


First, we define a generalized kernel trick to present a kernel as an inner product of two mappings $\phi_s$ and $\phi_t$.
\newtheorem{myDef}{Definition}
\begin{myDef}\label{def:asy_kernel}
	A kernel trick for a kernel $\mathcal{K}: \reals^{d_s} \times \reals^{d_t} \rightarrow \reals$ 
	can be defined by the inner product of two different feature mappings as follows:
    \[
    \mathcal{K}(\bm{u},\bm{v}) = \left<\phi_s(\bm{u}),\phi_t(\bm{v})\right>, \forall \bm{u} \in \mathbb{R}^{d_s}, \bm{v} \in \mathbb{R}^{d_t},
    \]
    where $\phi_s:\reals^{d_s} \rightarrow \mathbb{R}^p$, $\phi_t:\reals^{d_t} \rightarrow \mathbb{R}^p$, 
    and $\mathbb{R}^p$ is a high-dimensional even infinite-dimensional space.
\end{myDef}

Different from the classical kernel trick, the above definition allows different $\phi_s$ and $\phi_t$, of which even the dimension $d_s$ and $d_t$ could be different. In this paper, we consider the case $d:=d_s=d_t$ then both $\mathcal{K}(\bm{u},\bm{v})$ and $\mathcal{K}(\bm{v},\bm{u})$ are well-defined and the kernel matrix for training data is square but asymmetric. 
Definition \ref{def:asy_kernel} is compatible with the existing symmetric kernels, including PSD and indefinite ones.
\begin{enumerate}
    \item The symmetric and positive semi-definite kernel $\mathcal{K}(u,v)$ can be defined as follows:
    \[
    \mathcal{K}(\bm{u},\bm{v}) = \left<\phi(\bm{u}),\phi(\bm{v})\right>, \forall \bm{u},\bm{v} \in \reals^d,
    \]
    in the situation when, two feature mappings $\phi_s$ and $\phi_t$ are identical $\phi:=\phi_s=\phi_t$. $\phi_s,\phi_t\in\reals^d\rightarrow\mathbb{R}^p$.
    \item The symmetric and indefinite kernel $\mathcal{K}(u,v)$ can be defined as follows:
\begin{equation*}
    \begin{aligned}
         \mathcal{K}(\bm{u},\bm{v}) &= \mathcal{K}_1(\bm{u},\bm{v}) - \mathcal{K}_2(\bm{u},\bm{v})
         \\
         & = \left<\phi_1(\bm{u}),\phi_1(\bm{v})\right> - \left<\phi_2(\bm{u}),\phi_2(\bm{v})\right>\\
         & = \left[
         \begin{array}{cc}
              \phi_1(\bm{u})  \\
              \phi_2(\bm{u})
         \end{array}\right]^\top
         \left[
         \begin{array}{cc}
              \phi_1(\bm{v})  \\
              -\phi_2(\bm{v})
         \end{array}\right]\\
         & := \left<\phi_s(\bm{u}),\phi_t(\bm{v})\right>, \forall \bm{u},\bm{v} \in \reals^d,\\
    \end{aligned}
\end{equation*}
in the situation when two feature mappings $\phi_s$ and $\phi_t$ are not identical, $\mathcal{K}_1$ and $\mathcal{K}_2$ are two PSD kernels and $\phi_1:\reals^d\rightarrow\mathbb{R}^{p_1}$ and $\phi_2:\reals^d\rightarrow\mathbb{R}^{p_2}$ are two high-dimensional feature mappings corresponding to $\mathcal{K}_1$ and $\mathcal{K}_2$, respectively. $\mathbb{R}^{p_1}$ and $\mathbb{R}^{p_2}$ are two high-dimensional spaces.
\end{enumerate}

The kernel trick associated with an asymmetric kernel contains two different feature mappings. Using the concept from directed graphs, we call them \emph{source} and \emph{target} features, respectively. Then for each sample, e.g., a node in a directed graph, we can extract two features from different views and classify the sample in the framework of the least squares support vector machine, which is hence called \emph{Ask-LS}. The AsK-LS in the primal space takes the following form, 
\begin{equation}
\begin{aligned}\label{asy_ls_svm}
&\min_{\omegas, \nus, b_1,b_2,\bm{e},\bm{h}} \quad \omegas^\top \nus + \frac{\gamma}{2}\sum_{i=1}^m e_i^2 + \frac{\gamma}{2}\sum_{i=1}^m h_i^2 \\
&\begin{array}{r@{\quad}r@{}l@{\quad}l}
\mbox{s.t.}\quad y_i(\omegas^\top \phi_s(\bm{x}_i)+b_1) = 1-e_i \\
\quad y_i(\nus^\top \phi_t(\bm{x}_i)+b_2) = 1-h_i \\
\quad \quad\quad\quad\quad\quad\quad \forall i \in\{1,2,\cdots,m\},
\end{array}
\end{aligned}
\end{equation}
where $e_i,h_i \in \reals$ and $\gamma$ is the regularization coefficient of misclassification errors. In AsK-LS (\ref{asy_ls_svm}), the sample plays different roles and can have different meanings in different tasks. In the matrix decomposition, the features could be given as column and row vectors, which could lead to an asymmetric kernel for unsupervised learning \cite{suykens2016svd}. 

Now let us investigate the dual problem of (\ref{asy_ls_svm}), of which the kernel trick for an asymmetric kernel is crucial. 
\begin{theorem}[]\label{Proposition1}
Let $b_1^\star,b_2^\star,\alphas^\star,\betas^\star$ be the solution of the problem below, where $\bm{H}_{ij}=y_i\phi_s(\bm{x}_i)^\top\phi_t(\bm{x}_j)y_j = y_i \mathcal{K}(\bm{x}_i,\bm{x}_j) y_j$ with an asymmetric kernel $\mathcal{K}$.
    \begin{equation}\label{linear_system}
    \left[
    \begin{array}{cccc}
    0& 0 &\bm{Y}^\top  & 0\\ 
    0& 0 &0  & \bm{Y}^\top\\ 
    \bm{Y}& 0 &\frac{\bm{I}}{\gamma}  & \bm{H}\\ 
    0& \bm{Y} &\bm{H}^\top  &\frac{\bm{I}}{\gamma}\\ 
    \end{array}
    \right ]
    \left[
    \begin{array}{c}
     b_1 \\ 
     b_2\\
     \alphas\\
     \betas\\
    \end{array}
    \right ]
    =
    \left[
    \begin{array}{c}
     0\\
     0\\
     \bm{1}\\
     \bm{1}
    \end{array}
    \right ],
    \end{equation}
    
\begin{enumerate}
    \item $\omegas^\star$ and $\nus^\star$ are spanned by $\{\phi_t(\bm{x}_i)\}_{i=1}^m$ and $\{\phi_s(\bm{x}_i)\}_{i=1}^m$, respectively.
    \[
    \omegas^\star = \sum_{i=1}^m\beta_i^\star y_i\phi_t(\bm{x}_i), \quad \nus^\star =\sum_{i=1}^m\alpha_i^\star y_i\phi_s(\bm{x}_i),
    \]
where ($\omegas^\star,\nus^\star$) is a stationary point of the primal problem (\ref{asy_ls_svm})
\item The primal problem (\ref{asy_ls_svm}) results in two discriminant functions $f_s$ and $f_t$ as follows
\begin{equation}\label{two_clssifiers}
\left\{ \begin{aligned}
& f_s(x)  = \mathcal{K}(\bm{x},\bm{X})(\betas^\star\odot \bm{Y})+b_1^\star\\
& f_t(x)  = \mathcal{K}(\bm{X},\bm{x})(\alphas^\star\odot \bm{Y})+b_2^\star,\\
\end{aligned} \right.
\end{equation}
where $\bm{X}=\{\bm{x}_i\}_{i=1}^m$ is a training set and $\odot$ denotes a element-wise product,

$\mathcal{K}(\bm{x},\bm{X})=[\mathcal{K}(\bm{x}, \bm{x}_1),\cdots,\mathcal{K}(\bm{x}, \bm{x}_m)]$,

$\mathcal{K}(\bm{X},\bm{x})=[\mathcal{K}(\bm{x}_1,\bm{x}),\cdots,\mathcal{K}(\bm{x}_m,\bm{x})]$.

\end{enumerate}    
\end{theorem}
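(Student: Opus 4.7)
The plan is to derive the KKT system by standard Lagrangian duality, being mindful that the primal objective $\omegas^\top \nus$ is bilinear rather than convex, so I only obtain a stationary-point characterization (which is what the statement claims).

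First I would form the Lagrangian
\begin{equation*}
\begin{aligned}
L &= \omegas^\top \nus + \tfrac{\gamma}{2}\textstyle\sum_i e_i^2 + \tfrac{\gamma}{2}\sum_i h_i^2 \\
  &\quad - \textstyle\sum_i \alpha_i\bigl[y_i(\omegas^\top \phi_s(\bm{x}_i)+b_1)-1+e_i\bigr] \\
  &\quad - \textstyle\sum_i \beta_i\bigl[y_i(\nus^\top \phi_t(\bm{x}_i)+b_2)-1+h_i\bigr],
\end{aligned}
\end{equation*}
and then write the six stationarity conditions $\partial L/\partial\omegas,\partial L/\partial\nus,\partial L/\partial b_1,\partial L/\partial b_2,\partial L/\partial e_i,\partial L/\partial h_i$ all equal to zero. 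The crucial observation is that because the coupling term is $\omegas^\top\nus$, differentiation in $\omegas$ produces $\nus$ (not $\omegas$) and vice versa; this is precisely the mechanism that swaps the roles of $\phi_s$ and $\phi_t$. Explicitly, $\partial L/\partial\omegas=0$ gives $\nus^\star=\sum_i\alpha_i^\star y_i\phi_s(\bm{x}_i)$ and $\partial L/\partial\nus=0$ gives $\omegas^\star=\sum_i\beta_i^\star y_i\phi_t(\bm{x}_i)$, which establishes part~(1). The conditions in $b_1,b_2$ yield $\bm{Y}^\top\alphas^\star=0$ and $\bm{Y}^\top\betas^\star=0$, and those in $e_i,h_i$ yield $e_i^\star=\alpha_i^\star/\gamma$, $h_i^\star=\beta_i^\star/\gamma$.

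Next I would substitute the expressions for $\omegas^\star,\nus^\star,e_i^\star,h_i^\star$ back into the two equality constraints. Using the kernel trick of Definition~\ref{def:asy_kernel}, namely $\phi_s(\bm{u})^\top\phi_t(\bm{v})=\mathcal{K}(\bm{u},\bm{v})$, the first constraint becomes $\sum_j y_i\mathcal{K}(\bm{x}_i,\bm{x}_j)y_j\,\beta_j + y_ib_1 + \alpha_i/\gamma = 1$, i.e.\ $\bm{H}\betas+\bm{Y}b_1+\alphas/\gamma=\bm{1}$. For the second constraint one gets $\phi_s(\bm{x}_j)^\top\phi_t(\bm{x}_i)=\mathcal{K}(\bm{x}_j,\bm{x}_i)$, which corresponds to $\bm{H}^\top$, giving $\bm{H}^\top\alphas+\bm{Y}b_2+\betas/\gamma=\bm{1}$. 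Together with the two bias conditions, stacking these four blocks reproduces exactly the linear system~(\ref{linear_system}).

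Finally for part~(2), I would plug $\omegas^\star,\nus^\star$ into $f_s(\bm{x})=\omegas^{\star\top}\phi_s(\bm{x})+b_1^\star$ and $f_t(\bm{x})=\nus^{\star\top}\phi_t(\bm{x})+b_2^\star$; applying the kernel trick in the same two orientations as above yields $f_s(\bm{x})=\sum_j \beta_j^\star y_j\mathcal{K}(\bm{x},\bm{x}_j)+b_1^\star$ and $f_t(\bm{x})=\sum_j\alpha_j^\star y_j\mathcal{K}(\bm{x}_j,\bm{x})+b_2^\star$, which is precisely~(\ref{two_clssifiers}) after the Hadamard-product rewriting. The main subtlety is tracking the argument order of $\mathcal{K}(\cdot,\cdot)$ consistently, because the bilinear coupling causes $\omegas^\star$ to live in $\mathrm{span}\{\phi_t(\bm{x}_i)\}$ while it acts on $\phi_s(\bm{x})$, producing $\mathcal{K}(\bm{x},\bm{x}_j)$ for $f_s$ but $\mathcal{K}(\bm{x}_j,\bm{x})$ for $f_t$; this is what makes the two discriminant functions carry genuinely different asymmetric information rather than collapsing to a single symmetric classifier.
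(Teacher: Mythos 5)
Your proposal is correct and follows essentially the same route as the paper: form the Lagrangian, exploit that differentiating the bilinear coupling $\omegas^\top\nus$ in $\omegas$ yields $\nus$ (and vice versa) to get the swapped expansions of part (1), substitute back into the constraints and apply the kernel trick in the two orientations to recover the linear system (\ref{linear_system}) and the two classifiers (\ref{two_clssifiers}). The paper merely inserts an intermediate step writing the system with $\bm{Z_s}\bm{Z_t}^\top$ and $\bm{Z_t}\bm{Z_s}^\top$ before identifying them with $\bm{H}$ and $\bm{H}^\top$, which is cosmetic.
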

\begin{proof}
The Lagrangian function of the primal problem (\ref{asy_ls_svm}) is formulated as follows:

\begin{equation}
\begin{aligned}\label{Lagrangian}
\mathcal{L}(\Theta;\alphas,\betas) &= \omegas^\top \nus  + \frac{\gamma}{2}\sum_{i=1}^m e_i^2 + \frac{\gamma}{2}\sum_{i=1}^m h_i^2\\
&+\sum_{i=1}^m\alpha_i(1-e_i-y_i(\omegas^\top\phi_s(\bm{x}_i)+b_1))\\
&+\sum_{i=1}^m\beta_i(1-h_i-y_i(\nus^\top \phi_t(\bm{x}_i)+b_2)),
\end{aligned}
\end{equation}
where $\Theta = \{\omegas,\nus,b_1,b_2,\bm{e},\bm{h}\}$ is the parameter set. Then the condition of stationary points requires the following equations:
\begin{equation}\label{partical_derivative}
\left\{ \begin{aligned}
&\frac{\partial \mathcal{L}}{\partial\nus}=\omegas - \sum_{i=1}^m\beta_iy_i\phi_t(\bm{x}_i)=0\\
& \frac{\partial \mathcal{L}}{\partial\omegas} = \nus -\sum_{i=1}^m\alpha_iy_i\phi_s(\bm{x}_i)=0\\
& \frac{\partial \mathcal{L}}{\partial b_1} = \sum_{i=1}^m \alpha_iy_i=0 \\
& \frac{\partial \mathcal{L}}{\partial b_2} = \sum_{i=1}^m \beta_iy_i=0 \\
& \frac{\partial \mathcal{L}}{\partial e_i}=\gamma e_i-\alpha_i=0 \\
& \frac{\partial \mathcal{L}}{\partial h_i}=\gamma h_i-\beta_i =0\\
& \frac{\partial \mathcal{L}}{\partial \alpha_i} = 1-e_i-y_i(\omegas^\top\phi_s(\bm{x}_i)+b_1)=0\\
& \frac{\partial \mathcal{L}}{\partial \beta_i} =1-h_i-y_i(\nus^\top \phi_t(\bm{x}_i)+b_2)=0.
\end{aligned} \right.
\end{equation}
The last two conditions can be converted into the following equations
\begin{equation}\label{primal_equations}
\left\{ \begin{aligned}
& \frac{\partial \mathcal{L}}{\partial \alpha_i} = 1-\frac{\alpha_i}{\gamma}-y_ib_1-y_i\sum_{j=1}^m \beta_jy_j\phi_s^\top(\bm{x}_i)\phi_t(\bm{x}_j) = 0\\
& \frac{\partial \mathcal{L}}{\partial \beta_i} =1-\frac{\beta_i}{\gamma}-y_ib_2-y_i\sum_{j=1}^m \alpha_jy_j\phi_t^\top(\bm{x}_i)\phi_s(\bm{x}_j) = 0.\\
 \end{aligned} \right.
\end{equation}
The equations (\ref{primal_equations}) can be formulated as a linear system as follows,
\begin{equation}\label{linear_system_1}
\left[
\begin{array}{cccc}
0& 0 &\bm{Y}^\top  & 0\\ 
0& 0 &0  & \bm{Y}^\top\\ 
\bm{Y}& 0 &\frac{\bm{I}}{\gamma}  & \bm{Z_sZ_t}^\top\\ 
0& \bm{Y} &\bm{Z_tZ_s}^\top  &\frac{\bm{I}}{\gamma}\\ 
\end{array}
\right ]
\left[
\begin{array}{c}
 b_1 \\ 
 b_2\\
 \alphas\\
 \betas\\
\end{array}
\right ]
=
\left[
\begin{array}{c}
 0\\
 0\\
 \bm{1}\\
 \bm{1}
\end{array}
\right ],
\end{equation}
where
\begin{equation*}
\left\{ \begin{aligned}
& \bm{Z_s}=[y_1\phi_s^\top(\bm{x}_1);\cdots;y_m\phi_s^\top(\bm{x}_m)]\\
& \bm{Z_t}=[y_1\phi_t^\top(\bm{x}_1);\cdots;y_m\phi_t^\top(\bm{x}_m)].\\
 \end{aligned} \right.
\end{equation*}


According to Definition \ref{def:asy_kernel}, an asymmetric kernel is defined as follows:
\[
\mathcal{K}(\bm{x}_i,\bm{x}_j) = \left<\phi_s(\bm{x}_i),\phi_t(\bm{x}_j)\right>.
\]
Then, the linear system (\ref{linear_system_1}) can be reformulated as follows:
\begin{equation*}
    \left[
    \begin{array}{cccc}
    0& 0 &\bm{Y}^\top  & 0\\ 
    0& 0 &0  & \bm{Y}^\top\\ 
    \bm{Y}& 0 &\frac{\bm{I}}{\gamma}  & \bm{H}\\ 
    0& \bm{Y} &\bm{H}^\top  &\frac{\bm{I}}{\gamma}\\ 
    \end{array}
    \right ]
    \left[
    \begin{array}{c}
     b_1 \\ 
     b_2\\
     \alphas\\
     \betas\\
    \end{array}
    \right ]
    =
    \left[
    \begin{array}{c}
     0\\
     0\\
     \bm{1}\\
     \bm{1}
    \end{array}
    \right ],
    \end{equation*}
where $\bm{H}_{ij}=y_i\phi_s(\bm{x}_i)^\top\phi_t(\bm{x}_j)y_j = y_i \mathcal{K}(\bm{x}_i,\bm{x}_j) y_j$ with a given asymmetric kernel $\mathcal{K}$.

Suppose $b_1^\star,b_2^\star,\alphas^\star,\betas^\star$ be the solution of (\ref{linear_system}), according to partial derivative equations (\ref{partical_derivative}), a stationary point ($\omegas,\nus$) can be formulated as below,
\[
\omegas^\star = \sum_{i=1}^m\beta_i^\star y_i\phi_t(\bm{x}_i), \quad \nus^\star =\sum_{i=1}^m\alpha_i^\star y_i\phi_s(\bm{x}_i).
\]

Since a stationary point of the primal problem (\ref{asy_ls_svm}) is obtained, two functions which classify samples from source and target points of view respectively can be formulated as follows:
\[
\left\{ \begin{aligned}
& f_s(\bm{x}) = {\omegas^\star} ^\top\phi_s(\bm{x})+b_1^\star = \mathcal{K}(\bm{x},\bm{X})(\betas^\star\odot \bm{Y})+b_1^\star\\
& f_t(\bm{x}) = {\nus^\star}^\top\phi_t(\bm{x})+b_2^\star = \mathcal{K}(\bm{X},\bm{x})(\alphas^\star\odot \bm{Y})+b_2^\star.\\
\end{aligned} \right.
\]
\end{proof}

The primal-dual relationship between (\ref{asy_ls_svm}) and (\ref{linear_system}) for asymmetric kernels is characterized by Proposition \ref{Proposition1}, which also includes symmetric kernels. In that case, both the primal and dual formulations reduce to the classical LS-SVM, as shown below. 

\begin{theorem}[]\label{Proposition2}
When the kernel $\mathcal{K}$ in (\ref{asy_ls_svm}) is symmetric,
\begin{enumerate}
    \item two functions $f_s$ and $f_t$ are identical;
    \item two linear systems (\ref{linear_system}) and (\ref{linear_system_ls_svm}) are equivalent.
\end{enumerate}
\end{theorem}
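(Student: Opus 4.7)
The plan is to exploit an exchange symmetry of the dual system (\ref{linear_system}) that appears only when $\mathcal{K}$ is symmetric. When $\mathcal{K}$ is symmetric, we may take $\phi_s=\phi_t=:\phi$, and then $\bm{H}_{ij}=y_i\phi(\bm{x}_i)^\top\phi(\bm{x}_j)y_j=\bm{H}_{ji}$, so $\bm{H}=\bm{H}^\top$. This simple identity is what drives both claims.

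First, I would expand the block system (\ref{linear_system}) row by row: the top two block rows read $\bm{Y}^\top\alphas=0$ and $\bm{Y}^\top\betas=0$; the bottom two read $\bm{Y}b_1+\tfrac{1}{\gamma}\alphas+\bm{H}\betas=\bm{1}$ and $\bm{Y}b_2+\bm{H}^\top\alphas+\tfrac{1}{\gamma}\betas=\bm{1}$. Using $\bm{H}=\bm{H}^\top$, the permutation $(b_1,\alphas)\leftrightarrow(b_2,\betas)$ swaps the first row with the second and the third with the fourth, so it fixes the system as a whole. Consequently, if $(b_1^\star,b_2^\star,\alphas^\star,\betas^\star)$ is the solution of (\ref{linear_system}), so is $(b_2^\star,b_1^\star,\betas^\star,\alphas^\star)$. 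Invoking the standard LS-SVM regularity assumption that the coefficient matrix of (\ref{linear_system}) is nonsingular, uniqueness then forces $b_1^\star=b_2^\star=:b^\star$ and $\alphas^\star=\betas^\star=:\alphas^\star$.

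With these equalities, the four block rows of (\ref{linear_system}) collapse to the two rows $\bm{Y}^\top\alphas^\star=0$ and $\bm{Y}b^\star+(\tfrac{\bm{I}}{\gamma}+\bm{H})\alphas^\star=\bm{1}$, which is exactly the classical LS-SVM dual system (\ref{linear_system_ls_svm}). The converse is immediate: any solution $(b,\alphas)$ of (\ref{linear_system_ls_svm}) lifts by duplication to $(b,b,\alphas,\alphas)$, which satisfies (\ref{linear_system}). This establishes part (2). For part (1), symmetry of $\mathcal{K}$ yields $\mathcal{K}(\bm{x},\bm{X})=\mathcal{K}(\bm{X},\bm{x})$ as $1\times m$ vectors; combining this with $\alphas^\star=\betas^\star$ and $b_1^\star=b_2^\star$ in the formula (\ref{two_clssifiers}) gives $f_s(\bm{x})\equiv f_t(\bm{x})$.

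The only nontrivial step is the uniqueness argument used to conclude $\alphas^\star=\betas^\star$ and $b_1^\star=b_2^\star$, and this is the place where the main obstacle lies. The cleanest resolution is to assume, as is standard throughout the LS-SVM literature, that the coefficient matrix of (\ref{linear_system}) is nonsingular. A more elementary alternative is to add and subtract the third and fourth block rows: the sum produces an LS-SVM-type system in $(b_1+b_2,\alphas+\betas)$, while the difference produces a homogeneous system in $(b_1-b_2,\alphas-\betas)$ whose only solution (under a mild assumption on $\gamma$ relative to the spectrum of $\bm{H}$) is zero, directly recovering the desired equalities without invoking global uniqueness.
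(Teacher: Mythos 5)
Your proposal is correct and follows essentially the same route as the paper: both exploit the swap symmetry $(b_1,\alphas)\leftrightarrow(b_2,\betas)$ of the dual system when $\bm{H}=\bm{H}^\top$, deduce $b_1^\star=b_2^\star$ and $\alphas^\star=\betas^\star$ from nonsingularity of the coefficient matrix (the paper phrases this as applying the system to the difference vector and asserting the matrix ``has full rank''), and then collapse the four block rows to the classical LS-SVM system. Note that both arguments lean on the same unjustified regularity claim; your explicit flagging of it as an assumption, and your remark on the converse lifting $(b,\alphas)\mapsto(b,b,\alphas,\alphas)$, are minor but welcome additions.
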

\begin{proof}
According to Definition \ref{def:asy_kernel}, symmetric kernels can be also defined in the asymmetric kernel framework. Thus, kernels in AsK-LS can be also positive semi-definite, even indefinite.

A solution to the primal problem (\ref{asy_ls_svm}) is given by the linear system (\ref{linear_system}) which can be reformulated as follows, according to the matrix column transformation ($\rm CT$) and row transformation ($\rm RT$) formulas,

\begin{normalsize} 
\begin{equation*}
\begin{aligned}
\left[
\begin{array}{cccc}
0& 0 &\bm{Y}^\top  & 0\\ 
0& 0 &0  & \bm{Y}^\top\\ 
\bm{Y}& 0 &\frac{\bm{I}}{\gamma}  & \bm{H}\\ 
0& \bm{Y} &\bm{H}^\top  &\frac{\bm{I}}{\gamma}\\ 
\end{array}
\right]
&\left[
\begin{array}{c}
 b_1 \\ 
 b_2\\
 \alphas\\
 \betas\\
\end{array}
\right ]
=
\left[
\begin{array}{c}
 0\\
 0\\
 \bm{1}\\
 \bm{1}
\end{array}
\right ]\\
=&\left[
\begin{array}{cccc}
0& 0 &\bm{Y}^\top  & 0\\ 
0& 0 &0  & \bm{Y}^\top\\ 
\bm{Y}& 0 &\frac{\bm{I}}{\gamma}  & \bm{H}^\top\\ 
0& \bm{Y} &\bm{H}  &\frac{\bm{I}}{\gamma}\\ 
\end{array}
\right ]
\left[
\begin{array}{c}
 b_2 \\ 
 b_1\\
 \betas\\
 \alphas\\
\end{array}
\right ].
\end{aligned}
\end{equation*}
\end{normalsize}


Feeding the symmetric kernel $\mathcal{K}$, $\bm{H}$ is then a symmetric matrix, $\bm{H}_{ij}=y_i\phi_s(\bm{x}_i)^\top\phi_t(\bm{x}_j)y_j = y_i \mathcal{K}(\bm{x}_i,\bm{x}_j) y_j$, i.e., $\bm{H}=\bm{H}^\top$. The following equation holds,


\begin{equation*}\label{equation1}
\begin{aligned}
    \left[
    \begin{array}{cccc}
    0& 0 &\bm{Y}^\top  & 0\\ 
    0& 0 &0  & \bm{Y}^\top\\ 
    \bm{Y}& 0 &\frac{\bm{I}}{\gamma}  & \bm{H}\\ 
    0& \bm{Y} &\bm{H}^\top  &\frac{\bm{I}}{\gamma}\\ 
    \end{array}
    \right]
    &\left[
    \begin{array}{c}
     b_1 \\ 
     b_2\\
     \alphas\\
     \betas\\
    \end{array}
    \right ]
    \\
=&\left[
    \begin{array}{cccc}
    0& 0 &\bm{Y}^\top  & 0\\ 
    0& 0 &0  & \bm{Y}^\top\\ 
    \bm{Y}& 0 &\frac{\bm{I}}{\gamma}  & \bm{H}\\ 
    0& \bm{Y} &\bm{H}^\top  &\frac{\bm{I}}{\gamma}\\ 
    \end{array}
    \right]
    \left[
    \begin{array}{c}
     b_2 \\ 
     b_1\\
     \betas\\
     \alphas\\
    \end{array}
    \right ].
    \end{aligned}
\end{equation*}

The above equation can be simplified by moving the right term to the left.

\begin{equation}\label{equation2}
\begin{aligned}
    &\left[
    \begin{array}{cccc}
    0& 0 &\bm{Y}^\top  & 0\\ 
    0& 0 &0  & \bm{Y}^\top\\ 
    \bm{Y}& 0 &\frac{\bm{I}}{\gamma}  & \bm{H}\\ 
    0& \bm{Y} &\bm{H}^\top  &\frac{\bm{I}}{\gamma}\\ 
    \end{array}
    \right]
    \left[
    \begin{array}{c}
     b_1 -b_2\\ 
     b_2 -b_1\\
     \alphas-\betas\\
     \betas-\alphas\\
    \end{array}
    \right ]
    =\bm{0}\\ 
    \stackrel{\rm{RT}}{\longrightarrow}
    &\left[
    \begin{array}{cccc}
    \bm{Y}& 0 &\frac{\bm{I}}{\gamma}  & \bm{H}\\ 
    0& \bm{Y} &\bm{H}^\top  &\frac{\bm{I}}{\gamma}\\ 
    0& 0 &\bm{Y}^\top  & 0\\ 
    0& 0 &0  & \bm{Y}^\top\\
    \end{array}
    \right ]
    \left[
    \begin{array}{c}
     b_1 -b_2\\ 
     b_2 -b_1\\
     \alphas-\betas\\
     \betas-\alphas\\
    \end{array}
    \right ] \\
    \triangleq&\bm{A}\cdot\left[
    \begin{array}{c}
     b_1 -b_2\\ 
     b_2 -b_1\\
     \alphas-\betas\\
     \betas-\alphas\\
    \end{array}
    \right ]
    =\bm{0}, 
    \end{aligned}
\end{equation}
where the matrix $\bm{A}\in\mathbb{R}^{(2m+2)\times(2m+2)}$ denotes the system matrix. It can be noticed that $\bm{A}$ has a full rank and the equation (\ref{equation2}) indicates that $b_1 = b_2$ and $\alphas=\betas$. Then two functions in (\ref{two_clssifiers}) are identical as below,
\begin{equation*}
\begin{aligned}
    f_s(\bm{x}) &= \mathcal{K}(\bm{x},\bm{X})(\betas^\star\odot \bm{Y})+b_1^\star\\
    &= \mathcal{K}(\bm{X},\bm{x})(\alphas^\star\odot \bm{Y})+b_2^\star = f_t(\bm{x}).
\end{aligned}
\end{equation*}

Since $b_1 = b_2$, $\alphas=\betas$ and $\bm{H}$ is a symmetric matrix, the linear system (\ref{linear_system}) can be simplified into a lower-dimensional linear system as follows:
\begin{equation*}
    \left[
    \begin{array}{cc}
    0& \bm{Y}^\top\\ 
    \bm{Y} &\frac{\bm{I}}{\gamma}+\bm{H}\\ 
    \end{array}
    \right ]
    \left[
    \begin{array}{c}
     b_1 \\ 
     \alphas\\
    \end{array}
    \right ]
    =
    \left[
    \begin{array}{c}
     0 \\ 
     \bm{1}\\
    \end{array}
    \right ],
\end{equation*}
which is equivalent to (\ref{linear_system_ls_svm}) and we complete the proof.
\end{proof}

Symmetric and asymmetric kernels lead to a unified linear system. When the data size is not very large, (\ref{linear_system}) is easy to solve. For large-scale problems, one can consider the fixed-size \cite{espinoza2006fixed,de2010optimized}, Nystr\"{o}m approximation \cite{williams2001using}, for the latter of which necessary modifications are required. In the early work by Schmidt \cite{stewart1993early}, an asymmetric kernel has a pair of adjoint eigenfunctions corresponding to the eigenvalue, thus the modified Nystr\"{o}m approximation for asymmetric kernels leads to a standard singular value decomposition.
When the kernel is asymmetric, we simultaneously obtain two functions $f_s,f_t$. To fully use the asymmetric information, we need to merge them together. Averaging is a simple way and other ensemble methods like the stacked generalization \cite{stacked_generalization} can be also used. Overall, we summarize the algorithm for AsK-LS in the following.

\begin{algorithm}[htb]
\renewcommand{\algorithmicrequire}{\textbf{Input:}}
\renewcommand{\algorithmicensure}{\textbf{Output:}}
\caption{Learning with an asymmetric kernel in AsK-LS}
\label{alg:A}  
\begin{algorithmic}[1]
\REQUIRE The asymmetric kernel $\mathcal{K}$, the regularization parameter $\gamma$, training data $(\bm{X},\bm{Y})$, and testing data $\bm{Z}$.\\
\ENSURE The prediction $f(\bm{Z})$ of testing data $\bm{Z}$.
\STATE Calculate an asymmetric kernel matrix $\bm{H}$ by the asymmetric kernel $\mathcal{K}$ and training data $(\bm{X},\bm{Y})$.
\STATE $[\alphas^\star,\betas^\star,b_1^\star,b_2^\star]\leftarrow$ solve the linear system  (\ref{linear_system}).
\STATE Predict testing data from source and target views, i.e., $f_s(\bm{Z})$ and $f_t(\bm{Z})$.
\STATE Merge two classifiers $f_s(\bm{Z})$ and $f_t(\bm{Z})$ to obtain the final prediction $f(\bm{Z})$.
\RETURN $f(\bm{Z})$.
\end{algorithmic}  
\end{algorithm}

\section{Numerical experiments}
\label{sec:experiment}
The aim of designing the AsK-LS is to learn with asymmetric kernels. As discussed before, asymmetric kernels could come from asymmetric metrics and directed graphs. The following experiments are not to claim that asymmetric kernels are better than symmetric kernels, which is surely not true since the choice of kernels is problem-dependent. Instead, we will show that when the asymmetric information is crucial, our AsK-LS can largely improve the performance of the existing kernel learning methods. The experiments are implemented in MATLAB on a PC with Intel i7-10700K CPU (3.8GHz) and 32GB memory. All the reported results are the average over $10$ trials.

\subsection{Kullback-Leibler kernel}
Kullback-Leibler divergence is the measure of how one probability distribution $Q$ is different from another probability distribution $P$:
\begin{equation}\label{kl}
    \rm KL( P || Q ) = \int_{-\infty}^\infty P(x) \log \frac{P(x)}{Q(x)}dx,
\end{equation}
which is asymmetric. KL divergence has been successfully used in many fields, e.g., VAE \cite{Kingma2014} and GAN \cite{karras2019style,gui2021review}. 
From KL divergence, one can evaluate the similarity between two probability distributions by the exponentiation as follows:
\begin{equation}\label{kl_kernel}
    \mathcal{K}(s_i, s_j) = \exp(-a\cdot\rm KL( P_i || P_j )),
\end{equation}
where $a\in\mathbb{R}^+$ is a hyperparameter to control the scale of the kernel (\ref{kl_kernel}), $s$ denotes a sample and $P_i$,$P_j$ are probability distributions estimated by samples $s_i$ and $s_j$, respectively. Since KL divergence is asymmetric, the kernel matrix is also asymmetric. Thus the AsK-LS can be utilized to directly learn with the asymmetric KL kernel rather than its symmetry \cite{moreno2003kullback}.

We conduct image classification experiments on the Corel database \cite{wang2001simplicity}. The Corel database contains 80 concept groups e.g., autumn, aviation, dog, and elephant. And 10 classes of those groups are picked: beaches, bus, dinosaurs, elephants, flowers, foods, horses, monuments, snow mountains, and African people \& villages. There are 100 images per class: 90 for training and 10 for testing. We follow the standard feature extraction method \cite{moreno2003kullback} to obtain a sequence of 64-dimensional discrete cosine transform feature vectors $\bm{X} = \{\bm{x}_1,\bm{x}_2,\cdots,\bm{x}_l\}$ of an image where $l$ is the number of feature vectors.

In the experiment, we use the Gaussian mixture model (GMM) with 256 diagonal Gaussian components to estimate the probability distribution $P_i(\bm{X}_i)$ of the image feature vector sequence $\bm{X}_i$. Since the KL divergence for two GMMs can not be calculated by the formulation (\ref{kl}) directly, we use the Monte Carlo method to calculate it. Then kernel value between two images is given by the KL kernel similarity (\ref{kl_kernel}) of these two probability distributions.

The corresponding asymmetric KL divergence has been widely used in learning tasks. However, due to that KL divergence violates the symmetry requirement, it has never been directly used in the kernel-based learning. Now with the AsK-LS, we can use it in classification tasks. As a comparison, the SVM and the LS-SVM are used with a symmetric KL kernel \cite{moreno2003kullback}. For the parameters, i.e., the hyperparameter $a$ and regularization parameter $\gamma$ in all the three models is tuned by 10-folds cross-validation. The AsK-LS outputs two classifiers and here we simply average them.

The image classification is a multi-classes task where we utilize the one-vs-rest scheme, for which the average Micro-F1 and Macro-F1 scores are reported in Table \ref{tab:1}. The AsK-LS achieves better performance, showing that learning with the asymmetric information can help.


\begin{table}[]
\centering
\setlength{\abovecaptionskip}{0pt}%
\setlength{\belowcaptionskip}{5pt}%
\caption{Micro-/Macro-F1 scores (mean±std) of different methods with the KL kernel between GMMs on the Corel dataset.}
\begin{tabular}{cccc}
\specialrule{0.05em}{3pt}{3pt}
Methods & SVM & LS-SVM & AsK-LS\\
\specialrule{0.05em}{3pt}{3pt}
Micro-F1 & 0.830±0 &0.840±0 & \textbf{0.916±0.0048} \\
Macro-F1 & 0.822±0 & 0.832±0 &\textbf{0.914±0.0046}  \\
\specialrule{0.05em}{3pt}{3pt}
\end{tabular}
\label{tab:1}
\end{table}

\begin{table}[ht]
\normalsize
\centering
\setlength{\abovecaptionskip}{0pt}%
\setlength{\belowcaptionskip}{5pt}%
\caption{The detailed information of used directed graph datasets.}
\begin{tabular}{cccccc}
\specialrule{0.05em}{3pt}{3pt}
\multirow{2}{*}{Datasets} & \multirow{2}{*}{Cora} & \multirow{2}{*}{Citeseer} & \multirow{2}{*}{Pubmed} & AM- & AM-\\ 
&&&&photo&computer\\
\specialrule{0.05em}{3pt}{3pt}
\#Classes & 7 & 6 & 3 & 8 & 10  \\
\#Nodes & 2078 & 3327 & 19717 & 7650 & 13752 \\
\#Edges & 5429 & 4732& 44338 &143663 & 287209  \\ \specialrule{0.05em}{3pt}{3pt}
\end{tabular}
\label{tab:2}
\end{table}
\normalsize

\subsection{Directed adjacency matrix}

\begin{table*}[ht]
\normalsize
\centering
\setlength{\abovecaptionskip}{0pt}%
\setlength{\belowcaptionskip}{5pt}%
\caption{Micro-/Macro-F1 scores (mean±std) of different algorithms on the nodes classification task.}
\begin{tabular}{ccccccc}
\specialrule{0.05em}{3pt}{3pt}
\multirow{2}{*}{Datasets}  & \multirow{2}{*}{F1 Score} & \multicolumn{2}{c}{Embedding} & \multicolumn{3}{c}{Graph} \\
\cmidrule(r){3-4} \cmidrule(r){5-7}
 && SVM & LS-SVM & SVM & LS-SVM & AsK-LS  \\ \specialrule{0.05em}{3pt}{3pt}
\multirow{2}{*}{Cora} & Micro &0.740±0.010
 & 0.733±0.009& 0.305±0.010 & 0.713±0.015 & \textbf{0.753±0.013} \\
& Macro&0.730±0.012 &0.724±0.011  &0.077±0.004 & 0.708±0.016& \textbf{0.748±0.013} \\\specialrule{0.05em}{3pt}{3pt}
\multirow{2}{*}{Citeseer}& Micro& 0.497±0.013& 0.507±0.010 &0.393±0.042 & 0.596±0.011 & \textbf{0.605±0.012} \\
&Macro&0.454±0.013 & 0.452±0.010  & 0.301±0.044& 0.560±0.012 & \textbf{0.579±0.012} \\\specialrule{0.05em}{3pt}{3pt}
\multirow{2}{*}{Pubmed} & Micro &\textbf{0.732±0.006} &0.726±0.004  &0.602±0.022 & 0.635±0.006 & 0.719±0.004  \\
&Macro &0.680±0.006 & 0.669±0.005 &0.451±0.018 & 0.503±0.006 & \textbf{0.698±0.005} \\\specialrule{0.05em}{3pt}{3pt}
AM-& Micro&0.858±0.005 &0.834±0.006 &0.319±0.022 & 0.799±0.008 &\textbf{0.885±0.005}  \\
photo&Macro&0.834±0.005 &0.767±0.006 & 0.155±0.018 &0.732±0.008 &\textbf{0.874±0.005} \\\specialrule{0.05em}{3pt}{3pt}
AM- & Micro& 0.606±0.005&0.609±0.004 & 0.400±0.005 & 0.619±0.014 &\textbf{0.868±0.002}  \\
computer&Macro&0.429±0.004 &0.454±0.003  &0.107±0.003 &0.443±0.016&\textbf{0.846±0.005} \\\specialrule{0.05em}{3pt}{3pt}
\end{tabular}
\label{tab:3}
\end{table*}

Nodes classification with an asymmetric adjacency matrix is a task that needs to learn from asymmetric metrics. 
In this task, nodes in a directed graph $\mathcal{H}=\{\mathcal{N},\mathcal{E}\}$ with nodes set $\mathcal{N}$ and edges set $\mathcal{E}$ are classified. Its adjacency matrix showing the connection among nodes is defined as follows,
\[
\mathcal{A}_{ij}=\left\{ \begin{aligned}
& 1 \quad \textrm{if~} j \rightarrow i\\
& 0 \quad \textrm{otherwise}.\\
\end{aligned} \right.
\]
where $j \rightarrow i$ means that there is a link pointing to node $i$ from node $j$. This model has wide applications and here we consider five directed graphs, namely Cora, Citeseer, and Pubmed \cite{sen2008collective}, AM-photo, and AM-computer \cite{shchur2018pitfalls}. Details of the data set could be found in Table \ref{tab:2}. For the first three widely used graphs, the nodes and edges present documents and citations, respectively. For the latter two, 
the nodes present goods, and the edges mean the two goods are frequently bought together. 
The node classification 
is originally a multi-classes task where we utilize the one-vs-rest scheme and focus on Micro-F1 and Macro-F1 scores.

A directed graph is fully characterized by its adjacency matrix. However, in existing kernel methods, one cannot directly use the adjacency matrix as a kernel. 
Therefore, the current mainstream is to first do feature embedding \cite{ou2016asymmetric, zhou2017scalable, khosla2019node} 
to extract the asymmetric information and then do classification based on the extracted features. 
In the experiment, 
For the embedding, we use a SOTA embedding method NERD \cite{khosla2019node}, which utilizes random walk to extract node embeddings $\phi_s(v)$ and $\phi_t(v)$ as source feature and target feature of each node $v\in \mathcal{N}$ on a directed graph. We combine them as a unified feature $\phi(v) = [\phi^\top_s(v), \phi^\top_t(v)]^\top$ which then defines a symmetric kernel that can be used in classical kernel-based learning methods, e.g., the SVM and the  LS-SVM, and results are reported in the third and the fourth columns in Table \ref{tab:3}. 

Since the asymmetric information can be extracted by the embedding method, the performance of that is much better than using existing kernel methods with the adjacency matrix by symmetrization $(\mathcal{A}+\mathcal{A}^\top)/2$, of which micro-/macro- F1 scores are listed in the 5th and the 6th column in Table \ref{tab:3}.


Now we have the AsK-LS and can directly learn with the adjacency matrix without the feature embedding. Before sending the adjacency matrix to the AsK-LS, we pre-process it by its in-degree $d_i = \sum_{j=1}^m \mathcal{A}_{ij}$ (the same pre-process is also applied for the SVM and the LS-SVM). The Micro- and Macro-F1 scores of the AsK-LS are reported in the last column in Table \ref{tab:3}. The performance is much better than the SVM and the LS-SVM with the symmetrization of the kernel, indicating that the asymmetric information is helpful. For the comparison with the SOTA embedding methods, the AsK-LS is better or at least comparable, showing the effectiveness of AsK-LS for extracting the asymmetric information.

\subsection{Symmetric or asymmetric kernels}
We have shown that the proposed AsK-LS can learn with asymmetric kernels. Since asymmetric kernels are more general than symmetric ones, learning with asymmetric kernels is promising to get improvement, if there is an efficient method to get a suitable kernel. In the case in our paper, kernels are pre-given, then the performance is determined by the choice of kernels. In the previous sections, 
the asymmetric information, i.e., measuring the difference by KL divergence and directed distance, is important, thus the corresponding asymmetric kernels lead to a good performance. 
If not the case, a specific asymmetric kernel is not necessarily better than the symmetric one, e.g., the RBF kernel. 

We conduct classification experiments on several datasets from the UCI database\cite{uci}, where 60\% of the data are randomly picked up for training and the rest for testing. 
Two asymmetric kernels, called \emph{SNE kernel} and \emph{T kernel}, are considered here. They have been used for dimension reduction \cite{hinton2002stochastic} but have not been used for classification, since before there was no classification method that can learn with asymmetric kernels directly. The formulations are given as below,
\begin{enumerate}
    \item The SNE kernel with the parameter $\sigma\in\mathbb{R}$:
    \[
    \mathcal{K}_{\rm{SNE}}(\bm{x},\bm{y}) = \frac{\exp(-\|\bm{x}-\bm{y}\|_2^2)/\sigma^2}{\sum_{\bm{z}\in \bm{X}}\exp(-\|\bm{x}-\bm{z}\|_2^2)/\sigma^2},
    \]
    \item The T kernel:
    \[
    \mathcal{K}_{\rm{T}}(\bm{x},\bm{y}) = \frac{(1+\|\bm{x}-\bm{y}\|^2_2)^{-1}}{\sum_{\bm{z}\in \bm{X}}(1+\|\bm{x}-\bm{z}\|^2_2)^{-1}},
    \]
\end{enumerate}
where $\bm{X}$ stands for the training set.

\begin{table}[h]
\normalsize
\centering
\setlength{\abovecaptionskip}{0pt}%
\setlength{\belowcaptionskip}{5pt}%
\caption{Classification accuracy (mean±std) of the LS-SVM and the AsK-LS with RBF, SNE, and T kernels on the UCI database.}
\begin{tabular}{cccc}
\specialrule{0.05em}{3pt}{3pt}
\multirow{2}{*}{Datasets}& LS-SVM & \multicolumn{2}{c}{AsK-LS} \\
\cmidrule(r){2-2}\cmidrule(r){3-4}
& RBF &SNE &T\\
\specialrule{0.05em}{3pt}{3pt}
heart  &\textbf{0.837±0.032}  &0.824±0.025
&0.825±0.031  \\
sonar  &0.856±0.001   &0.854±0.028
 &\textbf{0.865±0.027} \\
monks1  &\textbf{0.791±0.012} &0.790±0.014
 &0.778±0.012 \\
monks2  &0.841±0.007    &\textbf{0.866±0.016}
& \textbf{0.866±0.014}\\
monks3  &\textbf{0.936±0.003}  &\textbf{0.936±0.004}
&0.901±0.004\\
pima &0.738±0.026   &0.749±0.026
&\textbf{0.752±0.021}\\
australian  &\textbf{0.862±0.015}  &0.854±0.019
 &0.859±0.032\\
spambase  &\textbf{0.925±0.007}  &0.908±0.007
 &0.922±0.018\\
\specialrule{0.05em}{3pt}{3pt}
\end{tabular}
\label{tab:4}
\end{table}

The classification accuracy of the AsK-LS with the two asymmetric kernels is reported in Table \ref{tab:4}, together with the accuracy of the LS-SVM with the RBF kernel (all the parameters are tuned by 10-folds cross-validation). Generally speaking, the best choice of kernels is problem-dependent and one cannot assert which kernel is good in advance. But at least, the proposed AsK-LS makes it possible to use asymmetric kernels. With delicately designing or efficiently learning, asymmetric kernels could lead to a good performance.

\section{Conclusion}
\label{sec:conclusion}
In this paper, we investigate the least squares support vector machine with asymmetric kernels in theoretical and algorithmic aspects. The proposed AsK-LS is the first model that can learn with asymmetric kernels. The primal and dual representations for AsK-LS are given, showing the feature interpretation that there are two different functions, can be simultaneously learned from the source and the target views. In numerical experiments, when the asymmetric information is physically important, the AsK-LS with asymmetric kernels significantly outperforms the SVM and the LS-SVM that can only deal with symmetric kernels. 

The most significant contribution of this paper is to make asymmetric kernels useful in the kernel-based learning. In methodology, the least squares framework is not the unique way to accommodate asymmetric kernels. Models from other kernel-based methods, e.g., the support vector machine and the logistic regression, etc., are worthy to be investigated. In theory, the functional space associated with asymmetric kernels is interesting, which is beyond the Reproducing Kernel Hilbert Space for PSD kernels or the Reproducing Kernel {Kre\u\i n} Space for indefinite kernels. In application, one can design asymmetric kernels for different tasks, especially those that involve directional relationships, including but not limited to directed graphs, the distribution distance \cite{zhou2006sample}, the causality analysis \cite{radinsky2012learning,xu2016learning}, and the optimal transport \cite{courty2016optimal,su2018order}.

\ifCLASSOPTIONcompsoc
  \section*{Acknowledgments}
\else
  \section*{Acknowledgment}
\fi
This work was supported by National Natural Science Foundation of China (No. 61977046), Shanghai Municipal Science and Technology Major Project (2021SHZDZX0102) and Shanghai Science and Technology Research Program (20JC1412700 and 19JC1420101). The research leading to these results has received funding from the European Research Council under the European Union’s Horizon 2020 research and innovation program / ERC Advanced Grant E-DUALITY (787960). This paper reflects only the authors’ views and the Union is not liable for any use that may be made of the contained information. This work was supported in part by Research Council KU Leuven: Optimization frameworks for deep kernel machines C14/18/068; Flemish Government: FWO projects: GOA4917N (Deep Restricted Kernel Machines: Methods and Foundations), PhD/Postdoc grant. This research received funding from the Flemish Government (AI Research Program).


\ifCLASSOPTIONcaptionsoff
  \newpage
\fi





\begin{thebibliography}{10}
\providecommand{\url}[1]{#1}
\csname url@samestyle\endcsname
\providecommand{\newblock}{\relax}
\providecommand{\bibinfo}[2]{#2}
\providecommand{\BIBentrySTDinterwordspacing}{\spaceskip=0pt\relax}
\providecommand{\BIBentryALTinterwordstretchfactor}{4}
\providecommand{\BIBentryALTinterwordspacing}{\spaceskip=\fontdimen2\font plus
\BIBentryALTinterwordstretchfactor\fontdimen3\font minus
  \fontdimen4\font\relax}
\providecommand{\BIBforeignlanguage}[2]{{%
\expandafter\ifx\csname l@#1\endcsname\relax
\typeout{** WARNING: IEEEtran.bst: No hyphenation pattern has been}%
\typeout{** loaded for the language `#1'. Using the pattern for}%
\typeout{** the default language instead.}%
\else
\language=\csname l@#1\endcsname
\fi
#2}}
\providecommand{\BIBdecl}{\relax}
\BIBdecl

\bibitem{scholkopf2002learning}
B.~Sch{\"o}lkopf and A.~J. Smola, \emph{Learning with Kernels: Support Vector
  Machines, Regularization, Optimization, and Beyond}.\hskip 1em plus 0.5em
  minus 0.4em\relax MIT press, 2002.

\bibitem{suykens2002least}
J.~A.~K. Suykens, T.~Van~Gestel, J.~De~Brabanter, B.~De~Moor, and
  J.~Vandewalle, \emph{Least Squares Support Vector Machines}.\hskip 1em plus
  0.5em minus 0.4em\relax World Scientific, 2002.

\bibitem{vapnik2013nature}
V.~Vapnik, \emph{The Nature of Statistical Learning Theory}.\hskip 1em plus
  0.5em minus 0.4em\relax Springer Science \& Business Media, 2013.

\bibitem{cortes1995support}
C.~Cortes and V.~Vapnik, ``Support-vector networks,'' \emph{Machine Learning},
  vol.~20, no.~3, pp. 273--297, 1995.

\bibitem{soman2009machine}
K.~Soman, R.~Loganathan, and V.~Ajay, \emph{Machine Learning with {SVM} and
  other Kernel Methods}.\hskip 1em plus 0.5em minus 0.4em\relax PHI Learning
  Pvt. Ltd., 2009.

\bibitem{drucker1997support}
H.~Drucker, C.~J. Burges, L.~Kaufman, A.~Smola, V.~Vapnik \emph{et~al.},
  ``Support vector regression machines,'' \emph{Advances in Neural Information
  Processing Systems}, vol.~9, pp. 155--161, 1997.

\bibitem{girolami2002mercer}
M.~Girolami, ``Mercer kernel-based clustering in feature space,'' \emph{IEEE
  Transactions on Neural Networks}, vol.~13, no.~3, pp. 780--784, 2002.

\bibitem{jeffreys1999methods}
H.~Jeffreys, B.~Jeffreys, and B.~Swirles, \emph{Methods of Mathematical
  Physics}.\hskip 1em plus 0.5em minus 0.4em\relax Cambridge University Press,
  1999.

\bibitem{ong2004learning}
C.~S. Ong, X.~Mary, S.~Canu, and A.~J. Smola, ``Learning with non-positive
  kernels,'' in \emph{Proceedings of the 21st International Conference on
  Machine Learning}, 2004, p.~81.

\bibitem{haasdonk2005feature}
B.~Haasdonk, ``Feature space interpretation of {SVMs} with indefinite
  kernels,'' \emph{IEEE Transactions on Pattern Analysis and Machine
  Intelligence}, vol.~27, no.~4, pp. 482--492, 2005.

\bibitem{huang2017indefinite}
X.~Huang, A.~Maier, J.~Hornegger, and J.~A.~K. Suykens, ``Indefinite kernels in
  least squares support vector machines and principal component analysis,''
  \emph{Applied and Computational Harmonic Analysis}, vol.~43, no.~1, pp.
  162--172, 2017.

\bibitem{oglic2019scalable}
D.~Oglic and T.~G{\"a}rtner, ``Scalable learning in reproducing kernel {Kre\u\i
  n} spaces,'' in \emph{International Conference on Machine Learning}.\hskip
  1em plus 0.5em minus 0.4em\relax PMLR, 2019, pp. 4912--4921.

\bibitem{liu2021fast}
F.~Liu, X.~Huang, Y.~Chen, and J.~A.~K. Suykens, ``Fast learning in reproducing
  kernel {Kre\u\i n} spaces via signed measures,'' in \emph{International
  Conference on Artificial Intelligence and Statistics}.\hskip 1em plus 0.5em
  minus 0.4em\relax PMLR, 2021, pp. 388--396.

\bibitem{hinton2002stochastic}
G.~Hinton and S.~T. Roweis, ``Stochastic neighbor embedding,'' in
  \emph{Advances in Neural Information Processing Systems}, vol.~15.\hskip 1em
  plus 0.5em minus 0.4em\relax Citeseer, 2002, pp. 833--840.

\bibitem{suykens1999least}
J.~A.~K. Suykens and J.~Vandewalle, ``Least squares support vector machine
  classifiers,'' \emph{Neural Processing Letters}, vol.~9, no.~3, pp. 293--300,
  1999.

\bibitem{loosli2015learning}
G.~Loosli, S.~Canu, and C.~S. Ong, ``Learning {SVM} in {Kre\u\i n} spaces,''
  \emph{IEEE Transactions on Pattern Analysis and Machine Intelligence},
  vol.~38, no.~6, pp. 1204--1216, 2015.

\bibitem{ou2016asymmetric}
M.~Ou, P.~Cui, J.~Pei, Z.~Zhang, and W.~Zhu, ``Asymmetric transitivity
  preserving graph embedding,'' in \emph{Proceedings of the 22nd ACM SIGKDD
  International Cconference on Knowledge Discovery and Data Mining}, 2016, pp.
  1105--1114.

\bibitem{zhou2017scalable}
C.~Zhou, Y.~Liu, X.~Liu, Z.~Liu, and J.~Gao, ``Scalable graph embedding for
  asymmetric proximity,'' in \emph{Proceedings of the AAAI Conference on
  Artificial Intelligence}, vol.~31, no.~1, 2017.

\bibitem{khosla2019node}
M.~Khosla, J.~Leonhardt, W.~Nejdl, and A.~Anand, ``Node representation learning
  for directed graphs,'' in \emph{Joint European Conference on Machine Learning
  and Knowledge Discovery in Databases}.\hskip 1em plus 0.5em minus 0.4em\relax
  Springer, 2019, pp. 395--411.

\bibitem{suykens2016svd}
J.~A.~K. Suykens, ``{SVD} revisited: A new variational principle, compatible
  feature maps and nonlinear extensions,'' \emph{Applied and Computational
  Harmonic Analysis}, vol.~40, no.~3, pp. 600--609, 2016.

\bibitem{moreno2003kullback}
P.~J. Moreno, P.~Ho, and N.~Vasconcelos, ``A {Kullback-Leibler} divergence
  based kernel for {SVM} classification in multimedia applications.'' in
  \emph{Advances in Neural Information Processing Systems}, 2003, pp.
  1385--1392.

\bibitem{mackenzie2004asymmetric}
M.~Mackenzie and A.~K. Tieu, ``Asymmetric kernel regression,'' \emph{IEEE
  Transactions on Neural Networks}, vol.~15, no.~2, pp. 276--282, 2004.

\bibitem{kuruwita2010density}
C.~Kuruwita, K.~Kulasekera, and W.~Padgett, ``Density estimation using
  asymmetric kernels and bayes bandwidths with censored data,'' \emph{Journal
  of Statistical Planning and Inference}, vol. 140, no.~7, pp. 1765--1774,
  2010.

\bibitem{koul2013large}
H.~L. Koul and W.~Song, ``Large sample results for varying kernel regression
  estimates,'' \emph{Journal of Nonparametric Statistics}, vol.~25, no.~4, pp.
  829--853, 2013.

\bibitem{pintea2018asymmetric}
S.~L. Pintea, J.~C. van Gemert, and A.~W. Smeulders, ``Asymmetric kernel in
  {Gaussian} processes for learning target variance,'' \emph{Pattern
  Recognition Letters}, vol. 108, pp. 70--77, 2018.

\bibitem{huang2021fl}
B.~Huang, X.~Li, Z.~Song, and X.~Yang, ``{FL-NTK}: A neural tangent
  kernel-based framework for federated learning analysis,'' in
  \emph{International Conference on Machine Learning}.\hskip 1em plus 0.5em
  minus 0.4em\relax PMLR, 2021, pp. 4423--4434.

\bibitem{tsuda1999support}
K.~Tsuda, ``Support vector classifier with asymmetric kernel functions,'' in
  \emph{European Symposium on Artificial Neural Networks (ESANN)}, 1999, pp.
  183--188.

\bibitem{munoz2003support}
A.~Munoz, I.~M. de~Diego, and J.~M. Moguerza, ``Support vector machine
  classifiers for asymmetric proximities,'' in \emph{Artificial Neural Networks
  and Neural Information Processing—ICANN/ICONIP 2003}.\hskip 1em plus 0.5em
  minus 0.4em\relax Springer, 2003, pp. 217--224.

\bibitem{koide2006asymmetric}
N.~Koide and Y.~Yamashita, ``Asymmetric kernel method and its application to
  {Fisher's} discriminant,'' in \emph{the 18th International Conference on
  Pattern Recognition}, vol.~2, 2006, pp. 820--824.

\bibitem{wu2010asymmetric}
\BIBentryALTinterwordspacing
W.~Wu, J.~Xu, H.~Li, and S.~Oyama, ``Asymmetric kernel learning,'' Tech. Rep.
  MSR-TR-2010-85, June 2010. [Online]. Available:
  \url{https://www.microsoft.com/en-us/research/publication/asymmetric-kernel-learning/}
\BIBentrySTDinterwordspacing

\bibitem{suykens2017deep}
J.~A.~K. \vspace{0mm} Suykens, ``Deep restricted kernel machines using
  conjugate feature duality,'' \emph{Neural Computation}, vol.~29, no.~8, pp.
  2123--2163, 2017.

\bibitem{espinoza2006fixed}
M.~Espinoza, J.~A.~K. Suykens, and B.~De~Moor, ``Fixed-size least squares
  support vector machines: A large scale application in electrical load
  forecasting,'' \emph{Computational Management Science}, vol.~3, no.~2, pp.
  113--129, 2006.

\bibitem{de2010optimized}
K.~De~Brabanter, J.~De~Brabanter, J.~A.~K. Suykens, and B.~De~Moor, ``Optimized
  fixed-size kernel models for large data sets,'' \emph{Computational
  Statistics \& Data Analysis}, vol.~54, no.~6, pp. 1484--1504, 2010.

\bibitem{williams2001using}
C.~Williams and M.~Seeger, ``Using the {Nystr{\"o}m} method to speed up kernel
  machines,'' in \emph{Proceedings of the 14th Annual Conference on Neural
  Information Processing Systems}, 2001, pp. 682--688.

\bibitem{stewart1993early}
G.~W. Stewart, ``On the early history of the singular value decomposition,''
  \emph{SIAM Review}, vol.~35, no.~4, pp. 551--566, 1993.

\bibitem{stacked_generalization}
K.~M. Ting and I.~H. Witten, ``Stacking bagged and dagged models,'' in
  \emph{Proceedings of the 14th International Conference on Machine Learning},
  1997, pp. 367--375.

\bibitem{Kingma2014}
D.~P. Kingma and M.~Welling, ``Auto-encoding variational bayes,'' in \emph{the
  2nd International Conference on Learning Representations}, 2014.

\bibitem{karras2019style}
T.~Karras, S.~Laine, and T.~Aila, ``A style-based generator architecture for
  generative adversarial networks,'' in \emph{Proceedings of the IEEE/CVF
  Conference on Computer Vision and Pattern Recognition}, 2019, pp. 4401--4410.

\bibitem{gui2021review}
J.~Gui, Z.~Sun, Y.~Wen, D.~Tao, and J.~Ye, ``A review on generative adversarial
  networks: Algorithms, theory, and applications,'' \emph{IEEE Transactions on
  Knowledge and Data Engineering}, pp. 1--1, 2021.

\bibitem{wang2001simplicity}
J.~Z. Wang, J.~Li, and G.~Wiederhold, ``Simplicity: Semantics-sensitive
  integrated matching for picture libraries,'' \emph{IEEE Transactions on
  Pattern Analysis and Machine Intelligence}, vol.~23, no.~9, pp. 947--963,
  2001.

\bibitem{sen2008collective}
P.~Sen, G.~Namata, M.~Bilgic, L.~Getoor, B.~Galligher, and T.~Eliassi-Rad,
  ``Collective classification in network data,'' \emph{AI Magazine}, vol.~29,
  no.~3, pp. 93--93, 2008.

\bibitem{shchur2018pitfalls}
O.~Shchur, M.~Mumme, A.~Bojchevski, and S.~G{\"u}nnemann, ``Pitfalls of graph
  neural network evaluation,'' \emph{arXiv preprint arXiv:1811.05868}, 2018.

\bibitem{uci}
C.~Blake and C.~Merz, ``{UCI} repository of machine learning databases,'' 1998.

\bibitem{zhou2006sample}
S.~K. Zhou and R.~Chellappa, ``From sample similarity to ensemble similarity:
  Probabilistic distance measures in reproducing kernel {Hilbert} space,''
  \emph{IEEE Transactions on Pattern Analysis and Machine Intelligence},
  vol.~28, no.~6, pp. 917--929, 2006.

\bibitem{radinsky2012learning}
K.~Radinsky, S.~Davidovich, and S.~Markovitch, ``Learning causality for news
  events prediction,'' in \emph{Proceedings of the 21st international
  conference on World Wide Web}, 2012, pp. 909--918.

\bibitem{xu2016learning}
H.~Xu, M.~Farajtabar, and H.~Zha, ``Learning {Granger} causality for {Hawkes}
  processes,'' in \emph{International Conference on Machine Learning}.\hskip
  1em plus 0.5em minus 0.4em\relax PMLR, 2016, pp. 1717--1726.

\bibitem{courty2016optimal}
N.~Courty, R.~Flamary, D.~Tuia, and A.~Rakotomamonjy, ``Optimal transport for
  domain adaptation,'' \emph{IEEE Transactions on Pattern Analysis and Machine
  Intelligence}, vol.~39, no.~9, pp. 1853--1865, 2016.

\bibitem{su2018order}
B.~Su and G.~Hua, ``Order-preserving optimal transport for distances between
  sequences,'' \emph{IEEE Transactions on Pattern Analysis and Machine
  Intelligence}, vol.~41, no.~12, pp. 2961--2974, 2018.

\end{thebibliography}


\bibliographystyle{IEEEtran}

\end{document}